\documentclass{article}

\usepackage{arxiv}

\usepackage[utf8]{inputenc} 
\usepackage[T1]{fontenc}    
\usepackage{hyperref}       
\usepackage{url}            
\usepackage{booktabs}       
\usepackage{amsfonts}       
\usepackage{nicefrac}       
\usepackage{microtype}      
\usepackage{lipsum}		
\usepackage{graphicx}
\usepackage{natbib}
\usepackage{doi}

\usepackage{lineno}

\usepackage{amsmath}
\usepackage{amssymb}
\usepackage{amstext}
\usepackage{amsthm}

\usepackage{lineno}
\modulolinenumbers[5]
\usepackage[font=small,labelfont=bf]{caption}
\usepackage{algorithm}
\usepackage{algpseudocode}


\usepackage{soul}
%

\usepackage[normalem]{ulem}
\usepackage{xcolor}
\newcommand\x{\bgroup\markoverwith{\textcolor{red}{\rule[0.5ex]{2pt}{0.4pt}}}\ULon}



\usepackage{booktabs}
\usepackage{multirow}

\usepackage{rotating}
%

\usepackage{array,multirow,graphicx}
\newcommand{\STAB}[1]{\begin{tabular}{@{}c@{}}#1\end{tabular}}

\newcommand{\EX}{\mathbb{E}}

\DeclareMathOperator*{\argmax}{arg\,max}
\DeclareMathOperator*{\argmin}{arg\,min}

\title{Genetic multi-armed bandits: a reinforcement learning approach for discrete optimization via simulation}


\author{ \href{https://orcid.org/0000-0001-7928-7267}{\includegraphics[scale=0.2]{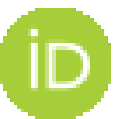}\hspace{1mm}Deniz Preil}\thanks{Corresponding author} \\
 	University of Augsburg\\
    Universitaetsstr. 16\\
	86159 Augsburg, Germany \\
	\texttt{deniz.preil@uni-a.de} \\
	\And
	\hspace{1mm}Michael Krapp \\
 	University of Augsburg\\
    Universitaetsstr. 16\\
	86159 Augsburg, Germany \\
	\texttt{michael.krapp@uni-a.de} \\
}



\hypersetup{
pdftitle={Genetic multi-armed bandits: a reinforcement learning approach for discrete optimization via simulation},
pdfauthor={Deniz Preil},
pdfkeywords={simulation optimization, multi-armed bandits, genetic algorithms, reinforcement learning},
}

\begin{document}
\maketitle

\begin{abstract}
	This paper proposes a new algorithm, referred to as GMAB, that combines concepts from the reinforcement learning domain of multi-armed bandits and random search strategies from the domain of genetic algorithms to solve discrete stochastic optimization problems via simulation.
In particular, the focus is on noisy large-scale problems, which often involve a multitude of dimensions as well as multiple local optima.
Our aim is to combine the property of multi-armed bandits to cope with volatile simulation observations with the ability of genetic algorithms to handle high-dimensional solution spaces accompanied by an enormous number of feasible solutions.
For this purpose, a multi-armed bandit framework serves as a foundation, where each observed simulation
is incorporated into the memory of GMAB. Based on this memory, genetic operators guide the search, as they provide powerful tools for exploration as well as exploitation.
The empirical results demonstrate that GMAB achieves superior performance compared to benchmark algorithms from the literature in a large variety of test problems. In all experiments, GMAB required considerably fewer simulations to achieve similar or (far) better solutions than those generated by existing methods.
At the same time, GMAB's overhead with regard to the required runtime is extremely small due to the suggested tree-based implementation of its memory.
Furthermore, we prove its convergence to the set of global optima as the simulation effort goes to infinity.
\end{abstract}

\keywords{Simulation optimization \and multi-armed bandits \and genetic algorithms \and reinforcement learning}

\section{Introduction}

Real-world problems often involve various stochastic impact factors (such as customer demand, lead times, or production times) and complex system dynamics, which render the application of analytical methods infeasible. As closed-form expressions of the objective functions typically do not exist for such problems, simulation is used to find optimal solutions that provide the best system performance. This approach is known as simulation optimization \cite[]{amaran2016simulation} or synonymously referred to as optimization via simulation (OvS). OvS problems differ from deterministic problems and are generally difficult to solve. 
Due to the presence of noise in the underlying simulation model, one single simulation observation is insufficient to infer the true value of the corresponding solution. Instead, multiple simulations are required to obtain an adequate estimate of the true value. Furthermore, these simulations often require large computational effort.

In recent decades, numerous studies have focused on OvS problems, where the decision variables consist of integers, which are also referred to as discrete OvS (DOvS) problems. Such problems arise, to name just a few, when determining cost-minimal base-stock levels in multi-echelon supply chains or when maximizing the flow-line throughput by identifying an optimal buffer capacity allocation. 
Depending on the number of feasible solutions, the literature offers a variety of different DOvS methods. For a more detailed overview, see \cite{hong2015discrete}.
In our study, we primarily focus on problems with an enormous number of feasible solutions, which might be high-dimensional and multimodal as well. Methods tailored for such problems are divided into guaranteed convergence algorithms and those without convergence guarantees.

Among the former category, several approaches guarantee only local convergence, such as the algorithm of \cite{andradottir1995method}, 
the COMPASS variants of \cite{hong2006discrete} and \cite{hong2010speeding}, the AHA algorithm of \cite{xu2013adaptive}, and the R-SPLINE algorithm proposed by \cite{wang2013integer}. Locally convergent algorithms, however, are only able to identify local optima. While each of those local optima constitutes the best solution in a given predefined neighborhood, it is not necessarily the best of all possible solutions. Hence, the set of local optima and its size crucially depends on the definition of the neighborhood.

In contrast and as the name implies, globally convergent methods aim to find the best of all feasible solutions. Therefore, the search for optimal solutions focuses not only on specific neighborhoods but on the entire set of solutions. As this is usually more challenging, it requires a mechanism that quickly identifies the most promising areas. Examples of such algorithms are the stochastic ruler proposed by \cite{yan1992stochastic}, the global search method proposed by \cite{andradottir1996global}, the simulated annealing version of \cite{alrefaei1999simulated}, the nested partitions method of \cite{shi2000nested}, the SMRAS method suggested by \cite{hu2008model}, the BEESE framework of \cite{andradottir2009balanced}, the GPS algorithm of \cite{sun2014balancing} as well the approaches of \cite{l2019gaussian} and \cite{semelhago2021rapid} based on Gaussian Markov random fields.

While convergent methods are more prominent in academic research, no guaranteed convergent algorithms are often implemented in commercial software packages, such as OptQuest, or are employed in studies with a strong application focus. Examples include well-known metaheuristics such as genetic algorithms (GAs). These methods achieve very good results in deterministic problems even with a large number of dimensions and huge solution spaces. However, they often fail to provide comparably good solutions when noise is involved. Therefore, to handle stochastic problems, time-consuming methods, such as sample average approximation, are required \cite[]{jin2005evolutionary}. Nevertheless, even then finding an optimal solution is not guaranteed.
Despite the absence of a convergence guarantee in DOvS problems, GAs deploy powerful random search strategies such as crossover or mutation.

The central contribution of this paper is to propose a new algorithm, called genetic multi-armed bandit (GMAB), for high-dimensional DOvS problems with a huge number of feasible solutions. 
GMAB combines key concepts from the reinforcement learning domain of multi-armed bandits (MABs) and random search strategies from the domain of GAs. 
Genetic operators are pivotal, as they provide a powerful tool for exploring unvisited areas of the solution space as well as exploiting the neighborhood of promising solutions. Unlike traditional GAs, GMAB searches for solutions not only based on the population of chromosomes (or solution candidates) of the current iteration but also on all solutions visited over the course of iterations. Furthermore, each simulation observation is used as an update to obtain a more accurate estimate of the corresponding solution. 
These two (memory-related) features are common practice among MAB algorithms and are also required to ensure the global convergence of GMAB.
Generally, we combine the strength of GAs to be suitable for problems with extremely large solution spaces with the ability of MABs to cope with volatile simulation observations.
However, as the number of visited solutions grows, so does the complexity of operations, such as selecting which solutions to visit next, since each visited solution and its estimate need to be added to the memory of GMAB. We therefore propose a new time-efficient procedure based on two balanced binary trees that ensure logarithmic complexity with respect to GMAB's memory size.
In our paper, we show that GMAB converges with probability 1 to the set of globally optimal solutions as the simulation effort increases. We also analyze the finite-time performance of GMAB in a variety of test problems from the literature and compare it with those of other convergent DOvS algorithms. The results demonstrate that GMAB requires fewer simulations to achieve similar or even better solutions than those generated by existing methods.
At the same time, it requires less computational effort than most benchmark algorithms from the literature due to the tree-based implementation of its memory.
Even in challenging high-dimensional and multimodal problems, GMAB achieves remarkable performance.
Based on the results, we are convinced that GMAB should play a central role in solving DOvS problems.

The remainder of the paper is organized as follows. In Section \ref{sec:Background}, we briefly review the background of GAs and MABs in the context of DOvS problems and outline which concepts of GAs and MABs are incorporated into GMAB. A detailed description of GMAB is given in Section \ref{sec:Genetic Multi-Armed Bandits} together with the proof of global convergence, the proposed stopping as well as the final selection criterion, and the tree-based implementation of memory. Section \ref{sec:Numerical Experiments} reports the numerical results, while Section \ref{sec:Conclusion} provides the conclusion.

\section{Background}
\label{sec:Background}

Since GMAB adopts principles from both MABs and GAs, 
we briefly introduce their fundamentals against the backdrop of DOvS problems in the following.

MABs are particularly suited for problems with a comparatively small number of feasible solutions. 
For a comprehensive overview, we refer to \cite{bubeck2012regret} and \cite{lattimore2020bandit}.
The name multi-armed bandit is inspired by the fictitious problem of a gambler aiming to identify the best slot machine. By sequentially pulling arms, the gambler receives rewards, which are in turn used to update the estimated values of the arms. In this nomenclature, an arm corresponds to a feasible solution and a reward is equal to a simulation observation.
While one family of MAB algorithms focuses on the maximization of the total reward received over a certain number of simulations, the other aims to find the arm that generates the largest reward once a stopping criterion, such as a certain number of simulation replications, is met. The latter problem is also referred to as the `pure exploration problem' or the best arm identification (BAI) problem \cite[]{bubeck2012regret}. 
In the remainder of this paper, when we speak of MAB algorithms, the sole focus is on MABs solving the BAI problem.
A main characteristic of MABs is the sequential pulling of arms and the consequential `learning' about the true values based on the received rewards. In this regard, the exploration-exploitation trade-off occurs. Is it advisable to choose arms other than those currently considered to be promising in order to discover ones that may be even better (`exploration')? Or is it preferable to continue selecting promising arms to get more accurate estimates of their true values (`exploitation')?
To tackle this trade-off, MABs store all rewards received thus far. Often this storage is accomplished by the incremental update of an arms' sample mean or by updating an arms' prior distribution in a Bayesian setting. In addition, the decision on which arm to pull next is typically made based on all information about the rewards gathered thus far. We will refer to this feature in the following as `full memory'. It is a major property of the GMAB algorithm presented in Section \ref{sec:Genetic Multi-Armed Bandits}.
Full memory is not unique to MABs. It is, for example, also part of many other DOvS algorithms including ranking and selection (R\&S) methods. For an overview of the latter, we refer to \cite{hong2021review}.
Note that the streams of research on R\&S and MABs (with regard to BAI problems) are closely related. 
They have the same roots but subsequently evolved in different communities. Nevertheless, they share the same objective \cite[]{hong2021review}.
It is therefore not possible to conclusively specify which family of methods the presented algorithm is more related to. In our opinion, the relation with MABs is larger, since `sequentially' visiting solutions and updating their estimated values is more predominant among MABs (although there is a long stream of literature on sequential R\&S as well). In addition, we use a final selection criterion whose roots stem clearly from the MAB domain.
MABs (as well as R\&S approaches) are especially designed for stochastic problems. However, they are usually only applicable to problems with a small number of feasible solutions since each solution usually needs to be visited at least once. In DOvS problems with a large number of feasible solutions, this is not possible. 

The exact opposite applies to GAs. 
They are tailored to large solution spaces but (in their primitive form) tend to perform worse as noise increases.
GAs adopt the principles of biological evolution and randomly search for a solution based on a population of candidates \cite[]{sivanandam2008genetic}. Within a population, the candidates are evaluated according to their fitness values, whereby candidates with larger fitness values are more likely to reproduce.
One major strength of GAs is their powerful random search strategies, such as crossover or mutation, which, for this reason, are also a central part of GMAB.
Despite being widely used in environments with moderate noise, GAs were originally developed for deterministic settings (usually with a high-dimensional and therefore huge solution space). Consequently, they do not provide any optimality guarantee in DOvS problems.
If noise is involved, one naïve approach is to assume a self-averaging nature, that is, to hope that promising solutions will prevail as the number of iterations increases. A commonly used alternative is to employ variance reduction techniques such as average approximation. However, this only reduces the volatility of the fitness values but does not eliminate it. For further techniques to modify GAs for stochasticity, we refer to \cite{jin2005evolutionary}.

As we aim to solve high-dimensional DOvS problems with large noise, the main idea of the proposed GMAB algorithm is to adapt the properties of 
\begin{itemize}
\item MABs to cope with volatile simulation observations and 
\item GAs to handle high-dimensional solution spaces with an enormous number of feasible solutions. 
\end{itemize}
The first core element of GMAB can be thought of as a traditional MAB framework, in which each simulation observation is stored and utilized to determine which solutions to visit next.
The second element of GMAB comprises the genetic operators that specify this determination. 
Unlike conventional GAs, there is no single population but rather the entire memory of solutions visited thus far.
Employing genetic operators that ensure each solution is able to generate any offspring solution from the set of all feasible solutions with nonzero probability, GMAB converges to an optimal solution with probability 1 as the computational budget goes to infinity.
Note that there have already been (few) attempts to combine principals of MABs with evolutionary strategies, cf. \cite{liu2017bandit}, \cite{lucas2018n}, and \cite{qiu2019enhancing}.
However, these approaches differ fundamentally from GMAB. 
For example, they provide no convergence guarantee, they lack full memory, and they are designed for rather small-size problem instances.

\section{Genetic Multi-Armed Bandits}
\label{sec:Genetic Multi-Armed Bandits}

In the following we consider a DOvS problem of the form
\begin{equation} 
\begin{array}{cl}
\min &g(\textbf{x})=\EX[G(\textbf{x})] \\
 & \\
\text{s.t.} &\textbf{x}\in \Theta = \Omega \cap \mathcal{Z}^{D},
\end{array}
\label{f1}
\end{equation} 
where $\Omega$ is a
closed and bounded set and $\mathcal{Z}^{D}$ is the $D$-dimensional integer lattice.
The distribution of the random variable $G(\textbf{x})$ is an unknown function of the vector-valued decision variable $\textbf{x}=(x_1,\dots,x_D)^{\mathrm{T}}$. In this regard, we use $x_d^\mathrm{LB}$ and $ x_d^\mathrm{UB}$, respectively, to represent the lower and upper bounds of $x_d$ for all $d\in\{1,\dots,D\}$.
Although there is no closed-form expression of $G(\textbf{x})$, it is possible to observe a realization by performing a (mostly computationally expensive) simulation experiment at $\textbf{x}$. 
Let $\overline{G}(\textbf{x})$ denote the sample mean of observations of $\textbf{x}$. 
Note that in most relevant applications, it is a strongly consistent estimator of $g(\textbf{x})=\EX[G(\textbf{x})]$.
Furthermore, we use $\textbf{x}^*$ to denote a solution from the non-empty set of (globally) optimal solutions $\Theta^*$ and $\hat{\textbf{x}}^*_k$ to represent 
the solution that the algorithm considers to be the best if it would have stopped at the end of iteration $k$.

\subsection{The GMAB Algorithm}
Let $\mathcal{V}_k$ be the set of all solutions visited up to iteration $k$,  $N_k(\textbf{x})$ the number of all observations of solution $\textbf{x}$ up to $k$, and $R_k(\textbf{x})=\sum_{j=0}^k I_j(\textbf{x})$ the sum of all observations of solution \textbf{x} up to $k$, where:
\begin{equation} 
I_k(\textbf{x})=\begin{cases}             
G_k(\textbf{x})             ,  \quad \hspace{10pt}   \text{if } \textbf{x} \text{ was simulated in iteration } k\\
0        ,  \quad  \hspace{31pt}    \text{otherwise.}  
\end{cases}
\label{f2}
\end{equation} 
$G_k(\textbf{x})$ denotes the observation of $G(\textbf{x})$ received in iteration $k$ (if $\textbf{x}$ was simulated in $k$). 
Furthermore, let $\overline{G}_k(\textbf{x})=R_k(\textbf{x})/N_k(\textbf{x})$ be the sample mean of solution $\textbf{x}$ at $k$ $\forall \textbf{x} \in \mathcal{V}_k$.
In every iteration, the algorithm selects a set of solutions $\mathcal{S}_k \subseteq \Theta$ that will be visited in $k$. Since  $\mathcal{S}_k$ is a set, we do not allow duplicates.
If $k=0$, $\mathcal{S}_k$ consists of $m$ randomly selected solutions, where $m < |\Theta|$ is a natural even number.
If $k>0$, then $\mathcal{S}_k=\mathcal{E}_k\cup\mathcal{M}_k$.
Here, $\mathcal{E}_k$ denotes the best $m$ solutions out of $\mathcal{V}_{k-1}$, i.e., those with the smallest $\overline{G}_{k-1}(\textbf{x})$, 
and $\mathcal{M}_k$ denotes the result of genetic operators modifying $\mathcal{E}_k$.
If the selection of the best $m$ solutions is not unique, chance decides which of the best solutions will be included in $\mathcal{E}_k$.
It is important to note that in iteration $k$, the GMAB algorithm not only visits the solutions in $\mathcal{M}_k$ but also `revisits' those in $\mathcal{E}_k$ to obtain a more reliable sample mean of the current best solutions.
Revisiting (promising) solutions is a strategy which is also part of other DOvS algorithms, such as those of  \cite{hong2006discrete} and \cite{xu2013adaptive}. 
However, instead of revisiting each $\textbf{x}\in\mathcal{E}_k$, they revisit each $\textbf{x}\in\mathcal{V}_k$  in every iteration which causes a large effort.
In the supplementary material we provide a comprehensive discussion about the choice of $\mathcal{E}_k$.
Furthermore, unlike in conventional GAs, each visited solution `survives', i.e., it is captured in $\mathcal{V}_k$, and the corresponding sample mean (in terms of $N_k(\textbf{x})$ and $R_k(\textbf{x})$) is stored.
That is, we implement full memory.
Consequently, not only the currently visited solutions determine the offspring that will be visited in the next iteration but also all solutions that have been visited thus far.
Depending on whether an infinite or a finite computational budget is assumed, GMAB 
never stops visiting solutions or it terminates after a certain stopping criterion is met.
The budget also impacts the final selection criterion, cf. Line 14 of Algorithm \ref{algo:GMAB}. 
It determines which solution among $\mathcal{V}_k$ is considered to be the best at iteration $k$.
Note that it is actually not necessary to compute $\hat{\textbf{x}}^*_k$ after each iteration, as it has no impact on the behavior of GMAB in subsequent iterations. 
Instead, we recommend determining $\hat{\textbf{x}}^*_k$ only once after the algorithm has stopped.
We investigate the infinite budget performance (i.e., if $k \to \infty$) in the next section and discuss the stopping criterion for the finite case in Section \ref{sec:Stopping Criterion}. 
However, we first describe the genetic operators that modify the current best $m$ solutions $\mathcal{E}_k$ in each iteration, cf. Line 8 of Algorithm \ref{algo:GMAB}.

\begin{algorithm}
  \begin{algorithmic}[1]
	\State{$k\gets 0$}
	\State{$\mathcal{S}_0 \gets $Randomly select $m$ distinct solutions out of $\Theta$}
		\State{$N_0(\textbf{x})\gets 1, \quad \hspace{70pt} R_0(\textbf{x})\gets G_0(\textbf{x}) \quad \hspace{53pt} \forall \textbf{x} \in \mathcal{S}_0$}
		\State{$\mathcal{V}_0\gets \mathcal{S}_0$}
		\State{$k\gets k+1$}
		\While{Stopping criterion is not met}
        \State{$\mathcal{E}_k\gets$ Select the best $m$ solutions out of $\mathcal{V}_{k-1}$}
        \State{$\mathcal{M}_k\gets$ \Call{GeneticModification}{$\mathcal{E}_{k}$}}
				\State{$N_{k-1}(\textbf{x})\gets 0, \quad \hspace{43pt} R_{k-1}(\textbf{x})\gets 0 \quad \hspace{63pt} \forall \textbf{x}\in \mathcal{M}_k \setminus \mathcal{V}_{k-1}$}
				\State{$\mathcal{S}_k\gets \mathcal{E}_{k} \cup \mathcal{M}_k$}
								\State{$N_k(\textbf{x})\gets N_{k-1}(\textbf{x})+1, \quad \hspace{5pt} R_k(\textbf{x})\gets R_{k-1}(\textbf{x})+G_k(\textbf{x}) \quad \hspace{5pt}\forall \textbf{x} \in \mathcal{S}_k$}
				\State{$N_k(\textbf{x})\gets N_{k-1}(\textbf{x}), \quad \hspace{5pt}\hspace{18pt} R_k(\textbf{x})\gets R_{k-1}(\textbf{x}) \quad \hspace{42pt}\forall \textbf{x} \in \mathcal{V}_{k-1} \setminus \mathcal{S}_k$}
				\State{$\mathcal{V}_k\gets \mathcal{S}_k \cup\mathcal{V}_{k-1}$}
				\State{$\hat{\textbf{x}}^*_k \gets$ \Call{FinalSelection}{$\mathcal{V}_{k}$}}	
				\State{$k\gets k+1$}
     \EndWhile
	\State\Return {$\hat{\textbf{x}}^*_{k-1}$}
 \caption{Genetic Multi-Armed Bandit}
\label{algo:GMAB}
\end{algorithmic}
\end{algorithm}

Genetic operators provide a powerful tool especially for exploration but also for exploitation. 
In this regard, the current best $m$ solutions are passed to the function \textsc{GeneticModification}($\mathcal{E}_k$). Subsequently, all solutions of $\mathcal{E}_k$ are randomly arranged in $m/2$ pairs. 
For each pair, a single point crossover is executed with the crossover probability $p_\mathrm{cr}$. This crossover takes place at random position $g$. To demonstrate the principle of a single-point crossover, consider two solutions $\textbf{x}_1=(x_{1,1},\dots, x_{1,D})^{\mathrm{T}}$ and $\textbf{x}_2=(x_{2,1},\dots, x_{2,D})^{\mathrm{T}}$ of an arbitrary pair and, for example, $g=1$. 
After a successful crossover, two offspring solutions  $\textbf{x}^{'}_1=(x_{1,1}, x_{2,2},\dots, x_{2,D})^{\mathrm{T}}$ and  $\textbf{x}^{'}_2=(x_{2,1},x_{1,2},\dots, x_{1,D})^{\mathrm{T}}$ are obtained.
With probability $1-p_\mathrm{cr}$ no crossover is executed and the solutions of the corresponding pair remain unchanged. In this case, $\textbf{x}^{'}_1=\textbf{x}_1$ and $\textbf{x}^{'}_2=\textbf{x}_2$. Regardless of whether the crossover was executed or not, for every pair $\textbf{x}_1$ and $\textbf{x}_2$, the offspring solutions $\textbf{x}^{'}_1$ and  $\textbf{x}^{'}_2$  are assigned to the mutation set $\mathcal{O}$, which thereafter contains $m$ solutions.

In the subsequent mutation step, each of the $D$ components of each solution in $\mathcal{O}$, i.e., $x_{i,d} \hspace{5pt}\forall i=1,\dots,m;  \hspace{10pt} \forall \hspace{1pt}  d=1,\dots,D$ mutates with probability $p_\mathrm{mu}$.  
For this purpose, we use a Gaussian mutation operator.
Let $u_{i,d}$ denote a random number sampled from a uniform distribution between 0 and 1 and $n_{i,d}$ a random number sampled from a normal distribution with mean 0 and standard deviation $\sigma_d$.
Therefore, for every $i=1,\dots,m$ and $d=1,\dots,D$:
\begin{equation}
x_{i,d}^{'}=\begin{cases}             
\mathrm{round}(x_{i,d}+n_{i,d})              ,  \quad \hspace{10pt}   \text{if } u_{i,d}\leq p_\mathrm{mu} \\
x_{i,d}             ,  \quad  \hspace{72pt}    \text{otherwise.}  
\end{cases}
\label{f3}
\end{equation} 
If $x_{i,d}^{'}$ is outside the domain of $x_d$, i.e. if $x_{i,d}^{'}$ is larger than the upper bound $x_d^\mathrm{UB}$ or smaller than the lower bound $x_d^\mathrm{LB}$, we sample a new Gaussian noise term and replace the old one. We repeat this process until $x_d^\mathrm{LB} \leq x_{i,d}^{'}\leq x_d^\mathrm{UB}$.
During this process, we always use $\sigma_d=0.1(x_d^\mathrm{UB}-x_d^\mathrm{LB})$, as suggested by the literature \cite[]{hinterding1995gaussian}.
Finally, \textsc{GeneticModification}($\mathcal{E}_k$) returns the set of $m$ offspring solutions generated by the Gaussian mutation operator.
For $p_\mathrm{mu}>0$, the proposed procedure ensures that each solution $\textbf{x}=(x_1,\dots,x_D)^{\mathrm{T}}\in\mathcal{E}_k$ is able to generate any offspring solution $\textbf{x}^{'}=(x^{'}_1,\dots,x^{'}_D)^{\mathrm{T}}$ from the set of feasible solutions $\Theta$ with nonzero probability. This characteristic is of central importance for global convergence, as it allows to escape from local optima.

In general, a high level of exploration is desirable at the beginning of the search since it enables the quick identification of promising regions. As the number of iterations increases, enhanced exploitation is important to primarily focus on these promising areas. In early iterations, the crossover operator provides a powerful tool for exploration. Due to the heterogeneity of solutions in $\mathcal{E}_k$, the single point crossover implies large jumps within the solution space.
Since mutation exploits an increasing number of comparably good solutions in the neighborhood of currently promising ones, the distances between solutions in $\mathcal{E}_k$ decrease. Hence, the jumps within the solution space also decrease, which in turn encourages exploitation.

\subsection{Convergence of GMAB}
\label{sec:Convergence of GMAB}

In this section, we investigate the performance of GMAB when the budget is infinite, i.e., when $k\to\infty$. Note that there is no stopping criterion in this case. We term an algorithm globally convergent if the infinite sequence $(\hat{\mathbf x}_k^*)_{k\in\mathbb N}$ of solutions considered to be optimal converges with probability 1 to the set of optimal solutions $\Theta^*$, that is if $\mathrm P[\lim\limits_{k\to\infty}\hat{\mathbf x}_k^*\in\Theta^*]=1$. To assure global convergence of GMAB, we make the assumption that $\bar G(\mathbf x)$ is a strongly consistent estimator of $g(\mathbf x)=\EX[G(\mathbf{x})]$:

\newtheorem{assumption}{Assumption}
\begin{assumption}\label{A:1}
	If Assumption \ref{A:1} is satisfied, GMAB is globally convergent.
\end{assumption}

%
$\bar G_k(\mathbf x)$ then converges almost surely towards $g(\mathbf x)$ for every $\mathbf x\in\Theta$. This assumption comprises the strong law of large numbers and the ergodic theorem as special cases. Most simulation output satisfies this assumption \cite[]{hong2006discrete}. Further note that GMAB applies a Gaussian mutation operator. As long as $p_\mathrm{mu}>0$, this operator ensures that for every solution $\mathbf x\in\mathcal E_k$ in each iteration $k$ there is a non-zero probability to generate any offspring solution $\mathbf x'\in\Theta$. We can now state the following theorem.
\newtheorem{theorem}{Theorem}
\begin{theorem}\label{Th:1}
	If Assumption \ref{A:1} is satisfied, GMAB is globally convergent.
\end{theorem}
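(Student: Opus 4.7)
The feasible set $\Theta = \Omega \cap \mathcal{Z}^{D}$ is finite because $\Omega$ is closed and bounded. The plan is, in order, (i) to show that every $\mathbf{x} \in \Theta$ is visited infinitely often with probability one, (ii) to invoke Assumption \ref{A:1} so that $\bar{G}_k(\mathbf{x}) \to g(\mathbf{x})$ a.s.\ for every $\mathbf{x} \in \Theta$, and (iii) to conclude that the final selection criterion returns an element of $\Theta^*$ for all sufficiently large $k$, which is exactly the definition of global convergence stated just before the theorem.

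For (i), I would work in the natural filtration $(\mathcal{F}_k)_{k\geq 0}$ generated by GMAB's internal randomness. Conditional on $\mathcal{F}_{k-1}$, the elite set $\mathcal{E}_k$ is fixed and $\mathcal{M}_k$ is obtained by single-point crossover followed by coordinate-wise Gaussian mutation with $p_\mathrm{mu}>0$. For any source $\mathbf{x} \in \mathcal{E}_k$ and any target $\mathbf{x}' \in \Theta$, the probability of producing $\mathbf{x}'$ from $\mathbf{x}$ is strictly positive, because each coordinate mutates with positive probability and the rounded Gaussian proposal hits any prescribed integer in $[x_d^\mathrm{LB},x_d^\mathrm{UB}]$ with positive probability (the resampling loop only discards proposals outside the feasibility box and therefore does not destroy positivity). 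Since $\Theta$ is finite, the minimum of these probabilities over all source--target pairs is some constant $\delta>0$ independent of $k$, so $\mathrm{P}(\mathbf{x}' \in \mathcal{M}_k \mid \mathcal{F}_{k-1}) \geq \delta$ for every $\mathbf{x}' \in \Theta$. By L\'evy's conditional Borel--Cantelli lemma, $\mathbf{x}'$ is therefore generated, and hence simulated, infinitely often almost surely; taking a union over the finitely many $\mathbf{x}' \in \Theta$ gives $N_k(\mathbf{x}) \to \infty$ a.s.\ simultaneously for every $\mathbf{x} \in \Theta$.

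Given (i), Assumption \ref{A:1} immediately yields $\bar{G}_k(\mathbf{x}) \to g(\mathbf{x})$ a.s.\ along the subsequence of iterations at which $\mathbf{x}$ is simulated, and finiteness of $\Theta$ upgrades this to simultaneous convergence. Setting $\Delta = \min_{\mathbf{y} \in \Theta\setminus\Theta^*} g(\mathbf{y}) - \min_{\mathbf{x}\in\Theta} g(\mathbf{x}) > 0$ (the case $\Theta = \Theta^*$ being trivial), there exists a.s.\ a random $K$ such that $\bar{G}_k(\mathbf{x}^*) < \bar{G}_k(\mathbf{y})$ for every $k \geq K$, every $\mathbf{x}^* \in \Theta^*$, and every $\mathbf{y} \in \Theta\setminus\Theta^*$. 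Since $\mathcal{V}_k \supseteq \Theta$ for all large $k$ by step (i), any MAB-style \textsc{FinalSelection} rule of the natural form (argmin of $\bar{G}_k$ on $\mathcal{V}_k$, or on $\mathcal{E}_k$, which will then itself lie in $\Theta^*$) must return $\hat{\mathbf{x}}_k^* \in \Theta^*$ for every $k\geq K$, giving $\mathrm{P}[\lim_{k\to\infty}\hat{\mathbf{x}}_k^*\in\Theta^*]=1$.

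The only genuinely delicate step is (i): I need a history-uniform lower bound on the per-iteration probability of visiting any prescribed $\mathbf{x}'$, and then I need to turn it into an almost-sure statement even though the iterations are strongly dependent through $\mathcal{V}_{k-1}$ and $\mathcal{E}_k$. Finiteness of $\Theta$ combined with positivity of the Gaussian mutation kernel supplies the uniform bound, and the conditional Borel--Cantelli lemma absorbs the dependence. Everything after step (i) is then essentially a consequence of the strong law of large numbers embodied in Assumption \ref{A:1}, together with the observation that full memory guarantees $\mathcal{V}_k$ never loses a previously visited solution.
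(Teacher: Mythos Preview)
Your proposal is correct and shares the paper's three-step skeleton: establish infinitely many visits to every $\mathbf{x}\in\Theta$, invoke Assumption~\ref{A:1} to get uniform almost-sure convergence of sample means, and deduce that the selection $\hat{\mathbf{x}}_k^*=\argmin_{\mathbf{x}\in\mathcal{V}_k}\bar G_k(\mathbf{x})$ eventually lies in $\Theta^*$. The technical execution differs in two places, though. For step~(i), the paper simply asserts that the Gaussian mutation operator forces $\mathrm{P}[\lim_k\mathcal{V}_k=\Theta]=1$ and hence infinite visits; you actually supply the mechanism, namely a history-uniform lower bound $\delta>0$ on the conditional hitting probability and L\'evy's conditional Borel--Cantelli lemma to absorb the dependence on $\mathcal{E}_k$. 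This is more rigorous than what the paper writes down. For step~(iii), the paper reformulates global convergence as $\mathrm{P}[c(|g(\hat{\mathbf{x}}_k^*)-\min_{\mathbf{x}}g(\mathbf{x})|\geq\epsilon)_{k\in\mathbb N}=\infty]=0$, splits via the triangle inequality into two counting probabilities $\alpha$ and $\beta$, and bounds each by Boole's inequality plus a case analysis on whether $\min_{\mathbf{x}}\bar G_k(\mathbf{x})$ lies above or below $\min_{\mathbf{x}}g(\mathbf{x})$. You instead use a direct gap argument with $\Delta=\min_{\mathbf{y}\notin\Theta^*}g(\mathbf{y})-\min_{\mathbf{x}}g(\mathbf{x})$, which is shorter and avoids the $\epsilon$-counting machinery. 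Both routes are valid; yours is the more elementary one, while the paper's is deliberately patterned after the local-convergence proof of \citet{hong2006discrete}. Your only hedge---on the exact form of \textsc{FinalSelection}---is harmless: the paper's infinite-budget proof indeed takes it to be the argmin of $\bar G_k$ over $\mathcal{V}_k$, which is the case you cover.
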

%

\begin{proof}[Proof of Theorem \ref{Th:1}]

	First, note that because $\Theta$ is a finite set, the Gaussian mutation operator assures $\mathrm P[\lim\limits_{k\to\infty}\mathcal V_k=\Theta]=1$ as long as $p_\mu>0$. Therefore, every solution $\mathbf x\in\Theta$ is visited infinitely often with probability 1 as $k\to\infty$. Furthermore, note that $\mathrm{P}[\lim\limits_{k\to\infty}\hat{\mathbf x}_k^*\in\Theta^*]=1$ is equivalent to
	\begin{equation}\label{E:1}
		\mathrm P[c(|g(\hat{\mathbf x}_k^*)-\min_{\mathbf x\in\Theta}g(\mathbf x)|\geq\epsilon)_{k\in\mathbb N}=\infty]=0
	\end{equation}
	for any $\epsilon>0$, where $c(A_k)_{k\in\mathbb N}:=\sum\limits_{k\in\mathbb N}\mathbf1_{A_k}$ counts the number of events $A_k$ that occur within an infinite sequence of events $(A_k)_{k\in\mathbb N}$. For the left-hand side of (\ref{E:1})
	\begin{equation}\label{E:2}
		\renewcommand\arraystretch{1.2}
		\begin{array}{r@{\,\,}c@{\,\,}l}
			\mathrm P[c(|g(\hat{\mathbf x}_k^*)-\min\limits_{\mathbf x\in\Theta}g(\mathbf x)|\geq\epsilon)_{k\in\mathbb N}=\infty]&\leq&
				\underbrace{\mathrm P[c(|g(\hat{\mathbf x}_k^*)-\bar G_k(\hat{\mathbf x}_k^*)|\geq0.5\epsilon)_{k\in\mathbb N}=\infty]}_{=:\,\alpha}+\\
				&&\underbrace{\mathrm P[c(|\bar G_k(\hat{\mathbf x}_k^*)-\min\limits_{\mathbf x\in\Theta}g(\mathbf x)|\geq0.5\epsilon)_{k\in\mathbb N}=\infty]}_{=:\,\beta}
		\end{array}
	\end{equation}
	is valid due to the triangle inequality. Since $\alpha\leq\mathrm P[\exists\mathbf x\in\Theta:\,c(|\bar G_k(\mathbf x)-g(\mathbf x)|\geq0.5\epsilon)_{k\in\mathbb N}=\infty]$, applying Boole's inequality reveals
	\begin{equation*}
		\alpha\leq\sum_{\mathbf x\in\Theta}\mathrm P[c(|\bar G_k(\mathbf x)-g(\mathbf x)|\geq0.5\epsilon)_{k\in\mathbb N}=\infty],
	\end{equation*}
	which is zero with probability 1 for any $\epsilon>0$ due to Assumption \ref{A:1} in conjunction with $|\Theta|<\infty$.

	\sloppy Regarding the probability $\beta$ in (\ref{E:2}), note that the selection criterion $\hat{\mathbf x}_k^*=\argmin\{\bar G_k(\mathbf x)|\mathbf x\in\mathcal V_k\}$ in combination with $\mathrm P[\lim\limits_{k\to\infty}\mathcal V_k=\Theta]=1$ implies $\bar G_k(\hat{\mathbf x}_k^*)=\min\limits_{\mathbf x\in\Theta}\bar G_k(\mathbf x)$ with probability 1 for sufficiently large $k$, i.e. for all $k$ such that $\mathcal V_k=\Theta$. Now suppose sample path realizations with $\min\limits_{\mathbf x\in\Theta}\bar G_k(\mathbf x)\geq\min\limits_{\mathbf x\in\Theta}g(\mathbf x)$. Then
	\begin{equation*}
		|\min\limits_{\mathbf x\in\Theta}\bar G_k(\mathbf x)-\min\limits_{\mathbf x\in\Theta}g(\mathbf x)|\leq
		\bar G_k[\argmin\limits_{\mathbf x\in\Theta}g(\mathbf x)]-\min\limits_{\mathbf x\in\Theta}g(\mathbf x)\leq
		\max\limits_{\mathbf x\in\Theta}|\bar G_k(\mathbf x)-g(\mathbf x)|.
	\end{equation*}
	The same is true for all sample path realizations with $\min\limits_{\mathbf x\in\Theta}\bar G_k(\mathbf x)\leq\min\limits_{\mathbf x\in\Theta}g(\mathbf x)$, since then
	\begin{equation*}
    \setlength{\arraycolsep}{0pt}
		\renewcommand\arraystretch{1.0}
		\begin{array}{r@{\,\,}c@{\,\,}l}
			|\kern-0.15em\min\limits_{\mathbf x\in\Theta}\bar G_k(\mathbf x)\kern-0.15em-\kern-0.15em\min\limits_{\mathbf x\in\Theta}g(\mathbf x)|\kern-0.15em=\kern-0.15em
			\min\limits_{\mathbf x\in\Theta}g(\mathbf x)\kern-0.15em-\kern-0.15em\min\limits_{\mathbf x\in\Theta}\bar G_k(\mathbf x)&\kern-0.15em\leq\kern-0.15em&
			g[\argmin\limits_{\mathbf x\in\Theta}\bar G_k(\mathbf x)]-\min\limits_{\mathbf x\in\Theta}\bar G_k(\mathbf x)\\ &\kern-0.15em\leq\kern-0.15em&
			\max\limits_{\mathbf x\in\Theta}|g(\mathbf x)-\bar G_k(\mathbf x)|\kern-0.15em=\kern-0.15em\max\limits_{\mathbf x\in\Theta}|\bar G_k(\mathbf x)-g(\mathbf x)|.
		\end{array}
	\end{equation*}
	Hence,
	\begin{equation*}
		\beta\leq
		\mathrm P[c(\max\limits_{\mathbf x\in\Theta}|\bar G_k(\mathbf x)-g(\mathbf x)|\geq0.5\epsilon)_{k\in\mathbb N}=\infty]
		\kern-0.15em\leq\kern-0.15em
		\mathrm P[\exists x\in\Theta:\kern-0.15em\,c(|\bar G_k(\mathbf x)-g(\mathbf x)|\geq0.5\epsilon)_{k\in\mathbb N}=\kern-0.15em\infty]
	\end{equation*}
	follows, which, for the same reasons and under the same conditions as for $\alpha$, is zero with probability 1 for any $\epsilon>0$. This completes the proof.
\end{proof}
Note that the logic of this proof is closely related to the proof of Theorem 1 in \citet{hong2006discrete}. However, \citet{hong2006discrete} only provide a local convergence result. GMAB instead assures global convergence. This is driven by the fact that GMAB is designed to sample all solutions with non-zero probability. Further note that Theorem \ref{Th:1} implies that suboptimal solutions are visited infinitely often with probability 0 as $k\to\infty$. Thus, GMAB considers such solutions $\hat{\mathbf x}_k^*$ with probability 1 as optimal only finitely many times, although the budget of iterations is infinite.

Of course, it is desirable to ensure that the algorithm identifies an optimal solution given an infinite budget of iterations.
However, global convergence does not necessarily imply that the algorithm will achieve a `good' performance in finite time.
Since performance is the most important aspect and practical problems accompany a finite budget, we address the finite case in more detail in the subsequent sections.

\subsection{Stopping Criterion}
\label{sec:Stopping Criterion}

In practical applications, it is important to specify when a DOvS algorithm should stop.
For this purpose, the literature suggests various stopping criteria, whereas each stopping criterion entails different benefits and drawbacks.
For instance, one might simply stop if $\hat{\textbf{x}}^*_k$ does not change over several iterations.
However, this may be achieved too early.
Some other criteria aim to infer about the optimality gap, cf. \cite{sun2014balancing} and \cite{l2019gaussian}.
Since visiting all $ \textbf{x} \in\Theta$ is impossible, they model the unknown objective function surface as a realization of a random process.
For example,  \cite{l2019gaussian} employed Gaussian Markov random fields for this purpose.
However, such approach is accompanied by considerable computational effort as the number of dimensions in a DOvS problem increases \cite[]{semelhago2021rapid}.
Consequently, an application in higher-dimensional problems like TP4\_D10 to TP4\_D20 (see Section \ref{sec:Test Problems}) is practically infeasible due to excessive computational effort. Even problems such as TP2 with (see also Section \ref{sec:Test Problems}) would require a tremendous computational overhead. 

Against the background of the above 'curse of dimensionality` as well as of the fact that many practitioners regard the runtime as their main bottleneck
, we adopt a so-called fixed-budget stopping criterion which is applied not only in DOvS algorithms facing large solution spaces but also in R\&S and MAB approaches dealing with a much smaller number of feasible solutions \cite[]{hong2021review, kaufmann2016complexity}. Accordingly, the algorithm terminates after a certain budget, such as computation time, number of iterations, or number of simulation observations, has expired. 
Furthermore, a finite number of simulation observations or runtime (e.g., in seconds) offers the possibility to compare different DOvS algorithms in a fair and objective manner.

Keep in mind that each stopping criterion entails its own benefits and drawbacks. Consequently, there is no `best' stopping criterion and its selection strongly depends on the authors' perspective.
Furthermore, note that although the criterion terminates the search, it does not affect the general finite performance. The latter is characterized by the number of simulations required to achieve a certain solution quality.
Since the structure of $g(\textbf{x})$ is typically unknown, it requires numerical experiments to investigate such finite performance, especially in order to compare the performances of different DOvS algorithms.
For this purpose, we investigate GMAB's finite performance in Section \ref{sec:Numerical Experiments} and demonstrate how it outperforms previous DOvS algorithms in variety of existing test problems.
Before, however, we elaborate to what extent the final selection criterion in a finite setting differs from the infinite setting.

\subsection{Final Selection Criterion in the Finite Setting}
\label{sec:FSC}

In contrast to the infinite setting, a finite runtime raises the question of what final selection criterion should be used.
Of course, this criterion only needs to be applied once at the end of the last iteration, which is denoted by $K$ in the following.
One approach is to adopt the same criterion as for infinite runtime, namely, the solution with the best sample mean, i.e., $\hat{\textbf{x}}^*_K=\argmin \{\overline{G}_K(\textbf{x}) |\textbf{x}\in\mathcal{V}_K\}$. 
However, doing so entails risks. Consider the algorithm terminates after iteration $K$. For a (suboptimal) solution that is visited for the first time in $K$, namely, $\textbf{x}\in \mathcal{M}_K \setminus \mathcal{V}_{K-1}$, one might unfortunately observe an observation $G_K(\textbf{x})$ that is better than the sample means of all other previously visited solutions. Consequently, one would favor this barely visited solution over other solutions with much more robust sample means.
A different approach is to select the solution with the greatest number of simulation observations, i.e., $\textbf{z}_K=\argmax \{N_K(\textbf{x}) |\textbf{x}\in\mathcal{V}_K\}$, as $\hat{\textbf{x}}^*_K$. 
Nevertheless, this also entails a drawback. Consider an actual second-best solution is visited, for example, in the first iteration and then in each subsequent iteration (as it is contained in every $\mathcal{E}_k$). An actual best solution will then never be identified as the best, unless it is also already present in the first iteration and in all subsequent ones.

To consider both aspects, $\hat{\textbf{x}}^*_K$ should be chosen at least from the set of nondominated solutions $\mathcal{J}_K=\{\textbf{x}|  \overline{G}_K(\textbf{x}) \leq \overline{G}_K(\textbf{z}_K) \land \textbf{x}\in\mathcal{V}_K \}$.
If $|\mathcal{J}_K|=1$, we simply choose $\hat{\textbf{x}}^*_K=\mathcal{J}_K$.
Otherwise, we propose the following selection criterion.
It combines the above criteria to a hybrid one that is inspired by the upper confidence bound (UCB) criterion widespread in the MAB domain, see \cite{auer2002finite} for more details. 
UCB is based on the confidence interval $[\overline{G}_K(\textbf{x})-U_K(\textbf{x}), \overline{G}_K(\textbf{x})+U_K(\textbf{x})]$ with size $2U_K(\textbf{x})$. In our case, this interval is only determined for each $\textbf{x}\in\mathcal{J}_K$, since the solutions $\textbf{x}\in\mathcal{V}_K\setminus \mathcal{J}_K$ are not considered anyway.
According to Hoeffding's inequality (which, however, assumes $G(\textbf{x})$ to be bounded between 0 and 1), for each $\textbf{x}\in\mathcal{J}_k$:
\begin{equation}
\mathrm{P}[\overline{G}_K(\textbf{x})-\EX[G(\textbf{x})]\geq U_K(\textbf{x})] \leq \mathrm{exp}^{-2U_K(\textbf{x})^2N_K(\textbf{x})}.
\label{f4}
\end{equation} 
Consequently, $U_K(\textbf{x})=\sqrt{-\mathrm{ln}(\delta)/(2N_K(\textbf{x}))}$, where $\delta$ is the right-hand side of (\ref{f4}).
In their seminal paper, \cite{auer2002finite} proposed $\delta=[\sum_{\textbf{y} \in \mathcal{V}_K} N_K(\textbf{y})]^{-4}$ , so  $\delta$ decreases with an increasing total number of simulations.
Hence, $U_K(\textbf{x})=\sqrt{2\mathrm{ln}(\sum_{\textbf{y} \in \mathcal{V}_K} N_K(\textbf{y}))/N_K(\textbf{x})}$.
In spirit of the `optimism in the face of uncertainty' principle, UCB chooses the solution $\textbf{x}\in\mathcal{J}_K$ that minimizes $\overline{G}_K(\textbf{x})-U_K(\textbf{x})$ (or maximizes $\overline{G}_K(\textbf{x})+U_K(\textbf{x})$ in a maximization problem).
The term $U_K(\textbf{x})$ therefore favors solutions with a rather low number of $N_K(\textbf{x})$ in an optimistic manner.

However, we want to penalize such solutions, as their sample means entail larger volatility and hence an increased risk of underestimating the true mean. For the present minimization problem (\ref{f1}), instead of subtracting $U_K(\textbf{x})$, we therefore propose the `pessimistic' counterpart to UCB by adding $U_K(\textbf{x})$.
Furthermore, one cannot assume $G(\textbf{x})$ to be bounded in DOvS problems. 
Nevertheless, to maintain the above idea and to balance $\overline{G}_K(\textbf{x})$ and $U_K(\textbf{x})$ independent of the underlying DOvS problem, we scale $\overline{G}_K(\textbf{x})$ to the interval $[0,1]$ by the following transformation 
$\tilde{G}_K(\textbf{x}) =(\overline{G}_K-B_K)/(W_K-B_K)$
where $B_K=\min_{\textbf{x}\in\mathcal{J}_K} \overline{G}_K(\textbf{x})$ and $W_K=\max_{\textbf{x}\in\mathcal{J}_K} \overline{G}_K(\textbf{x})$.
Similar transformations have been proposed in the literature, see \cite{pedroso2015tree} or \cite{neto2020multi}.
Consequently, we select the final solution $\hat{\textbf{x}}^*_K$ as follows:
\begin{equation}
\hat{\textbf{x}}^*_K=\argmin\limits_{\textbf{x} \in \mathcal{J}_K} \tilde{G}_K(\textbf{x})+U_K(\textbf{x}).
\label{f5}
\end{equation} 
Note that unlike UCB in the context of MABs, GMAB uses this criterion only to determine the solution considered to be best at the end of $K$ but not to identify the solution(s) to be visited in each iteration.
The latter are specified by $\mathcal{S}_k$ and hence by the genetic operators.
Keep in mind that alternatively using equation Equation \ref{f5} to determine the solutions to visit in each iteration would drastically increase the runtime, since this cannot be accomplished by the following proposed efficient memory scheme.
This runtime aspect is also one of the reasons why MABs usually focus on problems with a rather small number of solutions $\Theta$.

\subsection{Efficient Memory Control Scheme}
\label{sec:Efficient Memory Control Scheme}

As the number of iterations increases, so does the number of solutions in $\mathcal{V}_k$ and hence the memory of GMAB.
However, this also implies an increased computational effort when identifying the best $m$ solutions in $\mathcal{V}_{k-1}$, determining solutions that will be visited for the first time, and updating $\mathcal{V}_k$, cf. Lines 7, 9, and 13 of Algorithm \ref{algo:GMAB}.

To ensure the efficiency of these operations even for a large $\mathcal{V}_k$, the memory component of GMAB is built on two (balanced) binary search trees.
In the first tree, the so-called `lookup tree' (LUT), a new node is inserted whenever a solution $\textbf{x}$ is visited for the first time.
Within this tree, a node corresponds to an object with two attributes, namely, a so-called `solution code' and a `position code'.
The solution code serves as the search key of the corresponding node and decodes all elements $(x_1,\dots,x_D)^{\mathrm{T}}$ of a new solution $\textbf{x}$ by means of a positional numeral system to a unique integer:
$\sum_{d=1}^D  10^{ \lceil \mathrm{log}_{10}(x_d^{\mathrm{UB}}-x_d^{\mathrm{LB}}+1) \rceil  (D-d)}(x_d-x_d^{\mathrm{LB}}).$
The second attribute, the position code, is equal to the index of $\textbf{x}$ within the data structure where all visited solutions together with the corresponding $N(\textbf{x})$ and $R(\textbf{x})$ values are stored.
The LUT, implemented as an AVL tree, allows checking with a complexity of only $O(|\mathcal{V}_k|)$ if a solution already exists in that data structure or not. The same complexity holds true for inserting a new node \cite[]{adelsonvelskii1963algorithm}.

The second search tree, the so-called `sample average tree' (SAT), is a red--black tree whose $|\mathcal{V}_k|$ nodes each consist of an object with two attributes. 
For more details on red--black trees, see \cite{cormen2009introduction}.
The first attribute is the search key. For any solution $\textbf{x}\in\mathcal{V}_k$, it is equal to $\overline{G}_k(\textbf{x})$. Similar to the LUT, the position code of $\textbf{x}$ serves as the second attribute.
Note that the insert, delete, and search operations each have a complexity of $O(\mathrm{log}(|\mathcal{V}_k|))$ \cite[]{cormen2009introduction}.
The interaction of both trees enables time-efficient control of the memory, as described in the following.

In each iteration $k \geq 1$, the $m$ best solutions $\mathcal{E}_k$ are deleted from the SAT.
However, before deleting them, the corresponding position codes are stored in the cache $\mathcal{C}_k$.
This cache is at first required to (still) be able to identify $\mathcal{E}_k$ and later to reference the solutions that will be visited in iteration $k$, namely, $\mathcal{S}_k$.
In the next step, $\mathcal{E}_k$ is passed to the \textsc{GeneticModification}() function, which generates the offspring solutions $\mathcal{M}_k$.
For each $\textbf{x}\in\mathcal{M}_k$, the elements $(x_1,\dots,x_D)^{\mathrm{T}}$ are decoded to a unique number according to the positional numeral system. Based on this number, the LUT is used to check whether $\textbf{x}\in\mathcal{V}_{k-1}$ and whether there is already an LUT node associated with $\textbf{x}$. If $\textbf{x}\in\mathcal{V}_{k-1}$, the LUT returns the corresponding position code of $\textbf{x}$, which is added to $\mathcal{C}_k$. If $\textbf{x}$ is a new solution, it is appended to the data structure of all visited solutions, an associated node is inserted in the LUT, and the position code of $\textbf{x}$ is added to $\mathcal{C}_k$.
Next, the solutions to which $\mathcal{C}_k$ refers, i.e., $\mathcal{S}_k$, are visited, and hence, the respective $N_k(\textbf{x})$ and $R_k(\textbf{x})$ values are updated.
For each of those solutions, we insert a new node in the SAT based on the position code of $\textbf{x}$ and the updated sample average $\overline{G}_k(\textbf{x})=R_k(\textbf{x})/N_k(\textbf{x})$. 
Finally, at the end of iteration $k$, the cache is cleared.

Since all operations employed in the LUT and SAT have logarithmic complexity, the tree-based memory control scheme of GMAB is extremely time-efficient as the memory size $|\mathcal{V}_k|$ increases. It therefore facilitates a significant speed advantage compared to many other DOvS algorithms.

\section{Numerical Experiments}
\label{sec:Numerical Experiments}
Because the structure of $g(\textbf{x})$ is usually unknown, it is impossible to provide any theoretical performance in terms of the number of simulations required to achieve a certain solution quality.
Although, as discussed in Section \ref{sec:Stopping Criterion}, stopping criteria exist that provide a final solution $\hat{\textbf{x}}^*_K$ which guarantees a certain quality with some probability, it is completely unknown when this occurs.
Consequently, they have no impact on the performance.
To reasonably evaluate the latter (and to demonstrate the strengths of GMAB), extensive numerical experiments are required, which we provide in the following.
The main objectives of this section are to (i) highlight the remarkably good finite-budget performance of GMAB; (ii) investigate the effect of the parameters $p_\mathrm{cr}$, $p_\mathrm{mu}$, and $m$ on its performance; and (iii) examine the runtime of GMAB as its memory size, i.e., the number of visited solutions $|\mathcal{V}_k|$ increases.

We conducted a full factorial experiment with $p_\mathrm{cr}, p_\mathrm{mu}$ $\in \{0.05, 0.1,\dots,1\}$ and $m\in\{10,20,\dots,100\}$ for each test problem to examine the impact on the performance, see Section \ref{sec:Choice of Parameters}.
However, in practice, parameter tuning might be only possible to a limited extent due to cost-intensive simulations. To account for this aspect, all results for each test problem rely, unless otherwise stated, on the same parameterization, namely, $p_\mathrm{cr}=1$, $p_\mathrm{mu}=0.25$, and $m=20$. This parameterization achieved reasonably good and robust performance over all test problems. Nevertheless, note that for each problem, there are problem-specific superior parameterizations. To provide insights, we have included the results of the full factorial experiment in the supplementary material. Finally, note that all reported results are based on 400 independently conducted runs of GMAB to ensure robust results.

\subsection{Test Problems}
\label{sec:Test Problems}

For several reasons, only test problems from the DOvS literature were employed.
First, this provides a certain degree of impartiality, since the results are not biased by problems that may be specifically tailored to GMAB.
Second, it allows to some extent a comparison of the performance of GMAB with those of other DOvS algorithms reported in the literature.
For a detailed comparison, however, many more comprehensive numerical experiments are necessary, which are beyond the scope of this study.

We aimed to select a variety of heterogeneous problems.
For the first two of the following four problems, the exact surface of the objective function is unknown. They are thus representative of DOvS problems frequently encountered in practice.
We considered one problem with a comparatively small number of feasible solutions in a low- dimensional space and another one with a considerably larger number of dimensions and therefore a tremendous number of solutions.
The remaining two problems involve scenarios with known surfaces (and thus with known optimal solutions) where noise terms are added to provide stochasticity. In this regard, the focus was in particular on functions with multiple local optima, both in low and high-dimensional solution spaces, to investigate to what extent GMAB can escape from local optima.

The first problem, referred to as TP1, originates from \cite{koenig1985procedure} and was adopted by \cite{l2019gaussian} to analyze the performance of their Gaussian Markov improvement algorithm (GMIA).
The objective is to determine the optimal parameters of an $(s,S)$ inventory policy (with $x_1=s$ and $x_2=S-s$) that minimize the expected average cost per period over a planning horizon of 30 periods. The periodic demand is assumed to be a Poisson random variable with an associated mean of 25.
Similar to \cite{l2019gaussian}, we assume $1 \leq x_1,x_2 \leq 100$  and thus $|\Theta|=10\,000$.
Although the exact surface is unknown, we conducted Monte Carlo experiments and performed one million replications for each solution to visualize the objective function contour, see Figure \ref{TP1_1}(a).
Based on these experiments, the optimal inventory policy is at $x_1=17$ and $x_2=36$, causing an expected cost of $106.167$.

\begin{figure}[t]
\begin{center}
\includegraphics[height=2.2in]{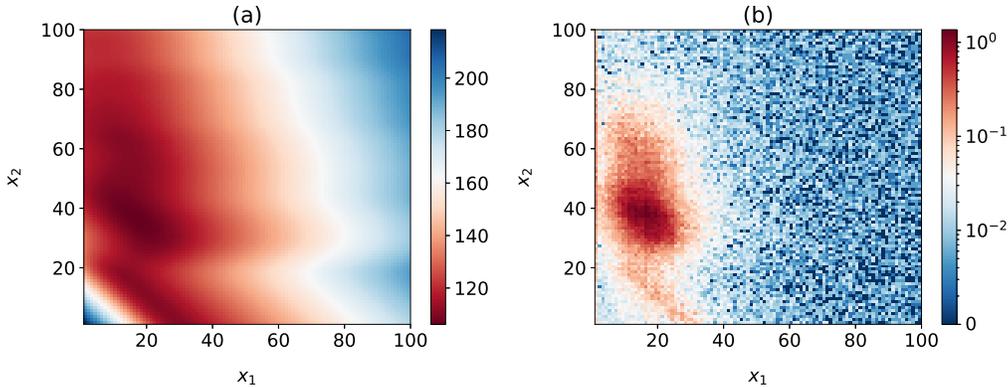}
\caption{(Color online) (a) Contour of the objective function in TP1 based on Monte Carlo experiments, (b) number of simulation replications of each solution conducted by GMAB averaged over all runs.} \label{TP1_1}
\end{center}
\end{figure}

The second problem, TP2, was first mentioned by \cite{hong2006discrete} to investigate the performance of their COMPASS approach and later applied by \cite{sun2014balancing} to compare it with the performance of their Gaussian process-based search (GPS). In TP2, the objective is to identify the optimal inventory capacities $x_1,\dots,x_8$ that maximize the expected total profit per period in an assemble-to-order system with eight items and five different types of customers 
(since (\ref{f1}) is assumed to be minimized by default, GMAB must be adapted to a maximization problem in TP2 accordingly).
Customer arrival is modeled through Poisson processes with customer-specific arrival rates.
Further stochasticity originates from normally distributed production times with item-specific expected values and standard deviations. Similar to the above authors, we assume $1\leq x_1,\dots,x_8 \leq 20$, resulting in a comparably large number of feasible solutions, namely, $|\Theta|=2.56 \times 10^{10}$. Unlike TP1 and due to this large number, we do not know the exact optimal solution with certainty. Additionally, the computational effort for obtaining replications is considerably larger compared to TP1.

The third problem, TP3, is the multimodal function proposed by \cite{sun2014balancing} in connection with their GPS approach. Unlike \cite{sun2014balancing}, however, we assume a minimization problem, which is why the sign of the objective function $g_3$ changes. Therefore, we have
\begin{equation}
g_3(x_1,x_2)=-10\bigg[\frac{\sin^6(0.05\pi \tfrac{1}{100}x_1)}{2^{2(\frac{x_1-90}{50})^2}}  +  \frac{\sin^6(0.05\pi \tfrac{1}{100}x_2)}{2^{2(\frac{x_2-90}{50})^2}}\bigg], \quad 0\leq x_1,x_2 \leq 10\,000,
\label{f7}
\end{equation} 
where a normally distributed noise term with zero mean and standard deviation of one is added to provide stochasticity. In TP3 with $|\Theta|\approx 10^{8}$, there are 25 local optima with a unique global optimum at $x_1^*=x_2^*=9\,000$ and an associated objective value of $g_3(x_1^*,x_2^*)=-20$.

The last problem, TP4, comprises the following high-dimensional function to be minimized with $2^D$ local optima
\begin{equation}
g_4(x_1,\dots,x_D)=-\sum_{d=1}^D(\beta_1\mathrm{exp}\{-\gamma_1(x_d-\xi_1^*)^2\} +  \beta_2\mathrm{exp}\{-\gamma_2(x_d-\xi_2^*)^2\}), \quad -100 \leq x_d \leq 100,
\label{f8}
\end{equation} 
where $\beta_1=300$, $\beta_2=500$, $\gamma_1=0.001$, $\gamma_2=0.005$, $\xi_1^*=-38$, and $\xi_2^*=56$. 
As in TP3, a noise term with zero mean and standard deviation of one is added.
As $|\Theta|=201^{D}$ increases exponentially with the number of dimensions, TP4 involves a tremendous number of feasible solutions.
Each local optima is of the form $x_d^*\in\{\xi_1^*, \xi_2^*\}$ $\forall d=1,\dots,D$ with a unique global optimum at $x_d^*=\xi_2^*$ $\forall d=1,\dots,D$ and a corresponding objective function value of $\approx -500.04\, D.$
The problem was first studied by \cite{xu2013adaptive} in the context of the comparison between IS-AHA and IS-COMPASS, two modified versions of AHA and COMPASS.
Going beyond \cite{xu2013adaptive}, we also consider a noise term with standard deviation of $100D$ in a second experiment to investigate the performance of GMAB with regard to a high-dimensional multimodal function with large(r) noise.

\subsection{Empirical Results}
\label{sec:Empirical Results}

For comparison purposes, 
we consider the performance plots over the course of a fixed number of simulation replications.
This means that the stopping criterion in Line 6 of Algorithm \ref{algo:GMAB} becomes this fixed number.
Hence, the solution considered to be best is captured after certain steps of simulation replications (and not after certain steps of iterations).
To avoid any confusion, we omit the iteration index with regard to $\hat{\textbf{x}}^*$, as long as a fixed number of simulation replications is used as the stopping criterion.

Whenever the exact value of the optimal solution is known (i.e., in case of TP3 and TP4), the performance is evaluated based on this true value. While we use the simulation results of separate Monte Carlo experiments for TP1, the performance in case of TP2 relies on the estimated values.

Before further analyzing the performance with respect to the objective function values of TP1, we illustrate the average number of simulation replications of each solution over 400 GMAB runs given a total budget of 500 replications in Figure \ref{TP1_1}b.
To better highlight the differences, a log base 10 scale was chosen.
After a total of only 500 simulation replications, it is already evident that GMAB visits the global optimum and its neighborhood on average considerably more often than non-promising regions of the solution space, which indicates a fast convergence.
In the following, we contrast the performance of GMAB with that of GMIA. The latter algorithm by \cite{l2019gaussian} was also applied to TP1 in their study (in connection with their proposed `complete expected improvement' criterion).
For comparison purposes, we examine the optimality gap (based on the results of the separately conducted Monte Carlo experiments) over the course of $2\,000$ simulation replications.
Each trajectory in Figure \ref{TP1_2} is averaged over 400 runs and starts after the respective initialization phase, which is why that of GMIA originates at a higher number of simulation replications. For detailed information about the setting of GMIA, we refer to \cite{l2019gaussian}.
From the trajectories in Figure \ref{TP1_2}, it is apparent that GMAB achieves a fast convergence and requires fewer simulation replications than GMIA to generate a smaller optimality gap at the same time.

\begin{figure}[t]
\begin{center}
\includegraphics[height=2.2in]{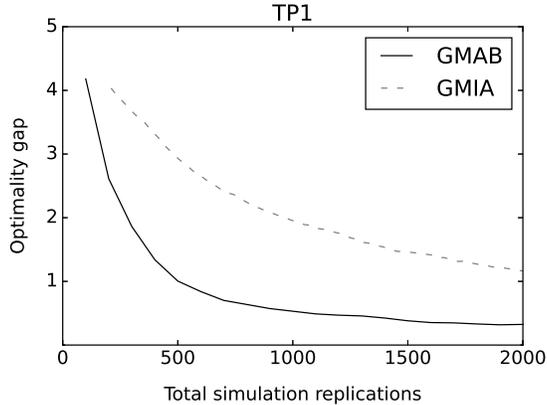}
\caption{Trajectories of the optimality gap in TP1 for GMAB and GMIA averaged over all runs.} \label{TP1_2}
\end{center}
\end{figure}

For TP2, we assume a total budget of $15\,000$ simulation replications in each run. Figure \ref{TP2_1} plots the performance of GMAB against the reported performances of COMPASS and GPS. For more information about the settings of the latter two applied in TP2, we refer to \cite{hong2006discrete} and \cite{sun2014balancing}.
As in TP1, GMAB performs considerably better than the benchmark algorithms and converges very quickly even in this higher-dimensional problem.

\begin{figure}
\begin{center}
\includegraphics[height=2.2in]{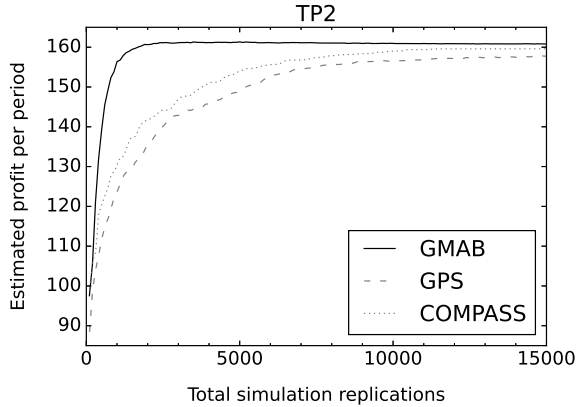}
\caption{Estimated profits per period in TP2 for GMAB, GPS, and COMPASS averaged over all runs.} \label{TP2_1}
\end{center}
\end{figure}

In addition, Table \ref{tab:TP2_Table} provides insight into the impact of three different final selection criteria (FSC) on GMAB's performance. FSC1 corresponds to the default criterion proposed in Section \ref{sec:FSC}, which is consequently used for all test problems and thus applied to GMAB in Figure \ref{TP2_1}. According to FSC2 and FSC3, the solution with the highest estimated value and the solution with the largest number of simulation replications are selected, respectively. As the sample means $\overline{G}(\hat{\textbf{x}}^*)$ are typically biased, we also report the `true' values after certain replication steps.
In this case, `true' means the result of $10\,000$ simulation replications of $\hat{\textbf{x}}^*$ performed independently and separately through a Monte Carlo experiment.
In addition, $N(\hat{\textbf{x}}^*)$ provides information on how many times the solution considered best was selected after certain simulation replication steps. Note that the provided $\overline{G}(\hat{\textbf{x}}^*)$, `true' as well as $N(\hat{\textbf{x}}^*)$ values are averaged over 400 runs.
As shown in Table \ref{tab:TP2_Table}, FSC1 achieves the best (true) performance.
FSC2 is overly optimistic by always selecting the solution with the current best sample mean. Most of the time, this solution is associated with low simulation replications, so its sample mean tends to rely on outliers with considerably worse true values.
Even though FSC3 performs comparatively well, the criterion of always selecting $\hat{\textbf{x}}^*$ as the solution with the most simulation replications turns out to be too conservative.

While the overestimation of the true values is obvious for FSC2, it also slightly arises in case of FSC1 and FSC3 due to the low $N(\hat{\textbf{x}}^*)$ values implied by the early stage of the search process.
However, the gap decreases over the course of total simulation replications.
The `true' performance therefore improves and does not decline, which could be inferred incorrectly based on $\overline{G}(\hat{\textbf{x}}^*)$ with respect to FSC1 in Table \ref{tab:TP2_Table}.

\begin{table*}[hbt]
\setlength{\tabcolsep}{3pt}
   \centering
   \caption{Comparison of the impact of different final selection criteria on the performance.}
   \label{tab:TP2_Table}
   \renewcommand{\arraystretch}{1.4}
   \begin{tabular}{c@{\hspace{1.2em}}cc@{\quad}ccc@{\hspace{1em}}ccc@{\hspace{1em}}ccc}
     \toprule
		&\multicolumn3c{FSC1}&&\multicolumn3c{FSC2}&&\multicolumn3c{FSC3}\\
		\cmidrule{2-4}\cmidrule{6-8} \cmidrule{10-12}
		\raisebox{0pt}[0pt][0pt]{\begin{tabular}[b]{@{}c@{}}Simulation\\[-.5mm]replications\end{tabular}} &  $\overline{G}(\hat{\textbf{x}}^*)$ & `true' & $N(\hat{\textbf{x}}^*)$ &&    $\overline{G}(\hat{\textbf{x}}^*)$ &  `true' & $N(\hat{\textbf{x}}^*)$& &  $\overline{G}(\hat{\textbf{x}}^*)$ &  `true' & $N(\hat{\textbf{x}}^*)$\\
     \midrule
		\phantom{1}1\,000  &  156.36 & 151.98 &   8.95 && 165.13 & 151.83 &  1.59 && 150.40 & 147.87 &  12.20\\
		\phantom{1}2\,000  &  160.64 & 157.37 &  20.65 && 168.74 & 155.45 &  2.04 && 157.81 & 155.83 &  26.16\\
		\phantom{1}3\,000  &  161.13 & 158.53 &  34.69 && 168.93 & 155.83 &  2.24 && 159.21 & 157.62 &  44.35\\
		\phantom{1}4\,000  &  161.17 & 159.13 &  50.86 && 168.93 & 155.87 &  2.38 && 159.61 & 158.41 &  63.99\\
		\phantom{1}5\,000  &  161.34 & 159.45 &  64.30 && 168.70 & 155.65 &  2.19 && 159.81 & 158.80 &  84.58\\
		10\,000            &  160.94 & 160.03 & 164.33 && 166.97 & 155.18 &  5.63 && 159.98 & 159.50 & 202.88\\
		15\,000            &  160.80 & 160.21 & 274.17 && 166.15 & 154.82 & 14.19 && 159.99 & 159.70 & 330.29\\
		 \bottomrule
\end{tabular}
\end{table*}

Since the optimal solution of TP3 is known, we evaluate the performance based on the optimality gap.
Also in this problem, GMAB performs noticeably better than the benchmark from the literature, cf. Figure \ref{TP3}. 
In the latter, the trajectories of GMAB as well as those of the benchmark algorithm GPS are averaged over all runs. For more details about the setup of GPS in TP3, we refer to \cite{sun2014balancing}.

Furthermore, we examine the performance across the 400 different runs of GMAB.
The light gray area indicates the region in which the trajectories of all GMAB runs are located.
The upper and the lower edges of this area represent the worst and best values, respectively,
that were scored within the different runs at certain simulation replications. Hence, the edges do not correspond to trajectories of particular runs but to the 0th and 100th percentiles.
The edges of the slightly darker gray area are the 5th and 95th percentiles, with the edges of the innermost darkest gray area being the 25th and 75th percentiles.
In each of the 400 runs, GMAB identifies the neighborhood surrounding the global optimum after at most only $3\,000$ total simulation replications (the second-best (local) optima are associated with an optimality gap of 2).
Accordingly, the results demonstrate GMAB's promising performance also in the low-dimensional but multimodal TP3 and hence the ability to escape from several local optima. 

\begin{figure}
\begin{center}
\includegraphics[height=2.2in]{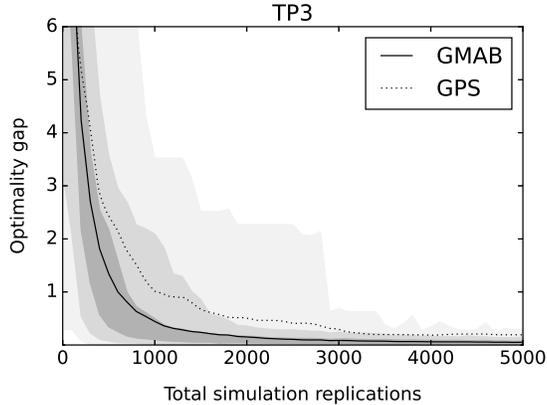}
\caption{Trajectories of the optimality gap in TP3 for GMAB and GPS averaged over all runs.} \label{TP3}
\end{center}
\end{figure}

We now move to the investigation of how GMAB performs when optimizing high-dimensional multimodal functions.
For this purpose, four instances of TP4 with different dimensions $D=5,10,15,20$ are considered, which are referred to as TP4\_D05, TP4\_D10, TP4\_D15 and TP4\_D20. As mentioned previously, $2^D$ local optima exist in each problem. In addition to the default setting with $m=20$, we also report GMAB's performance when $m=50$ and $m=100$. Figure \ref{TP4} shows the trajectories for the different settings averaged over 400 runs for each problem. Furthermore, Table \ref{tab:TP4_Table} provides detailed information about $g(\hat{\textbf{x}}^*)$ and the optimality gap denoted by $\Delta(\hat{\textbf{x}}^*)$ after a total of $10^5$ simulation replications.
In addition to the problems from the literature that assume a noise term with standard deviation of one, see \cite{xu2013adaptive}, Table \ref{tab:TP4_Table} reports GMAB's performance in all instances of TP4 considering a considerably higher standard deviation of $100D$. Obviously, a larger noise term is more challenging, as it raises the difficulty of the problem. In practice, one would expect to choose a larger budget for such problems.

As can be seen from Figure \ref{TP4}, the optimality gap decreases more quickly in case of $m=20$.
However, this advantage subsides after a certain number of replications and better performance is achieved with larger $m$ values.
This result is not surprising, since $m$ specifies the number of solutions provided for the genetic operators in each iteration. 
While at early stages of GMAB the heterogeneity of solutions within $\mathcal{E}_k$ is largest, it decreases over the course of iterations due to the genetic operators.
At the start of the search, the crossover operator causes large jumps within the solution space by recombining solutions and therefore quickly `explores' promising areas. 
Simultaneously, the mutation operator usually `exploits' similarly good or better solutions mostly located near promising ones. 
As a consequence, the solutions in $\mathcal{E}_k$ become increasingly similar over the course of time, which decreases exploration (cf. the supplementary material of this paper for an investigation of the heterogeneity of solutions within $\mathcal{E}_k$).
However, the larger $m$, the slower this process proceeds.  
Consequently, a large $m$ value causes a longer lasting exploration phase. This is of central importance for the present high-dimensional problems involving multiple local optima.
Therefore, $m=100$ achieves the best performance in all four instances of TP4.
Even though low $m$ values are detrimental in such problems, global convergence (cf. Section \ref{sec:Convergence of GMAB}) is ensured irrespective of the choice of $m$.
In the following section, we discuss in more detail how the performance of GMAB is affected by $m$ and also the probabilities $p_\mathrm{cr}$, $p_\mathrm{mu}$.

To compare the performance of GMAB with those of benchmark algorithms from the literature, we refer to Table 3 in \cite{xu2013adaptive} for detailed information on the performance of the benchmark algorithms IS-AHA and IS-COMPASS. To briefly summarize, GMAB generates superior results even with the suboptimal parametrization of $m=20$ in all four instances of TP4.
For example in case of TP4\_D20, while IS-AHA and IS-COMPASS achieve an optimality gap of 22.4\% and 20\%, respectively, GMAB's optimality gap are 13\%, 5.9\%, and 3\% when using $m=20$, $m=50$, and $m=100$, respectively. Furthermore, note that to achieve these results, GMAB requires even fewer simulation replications than both benchmark algorithms.

\begin{figure}
\begin{center}
\includegraphics[height=4.55in]{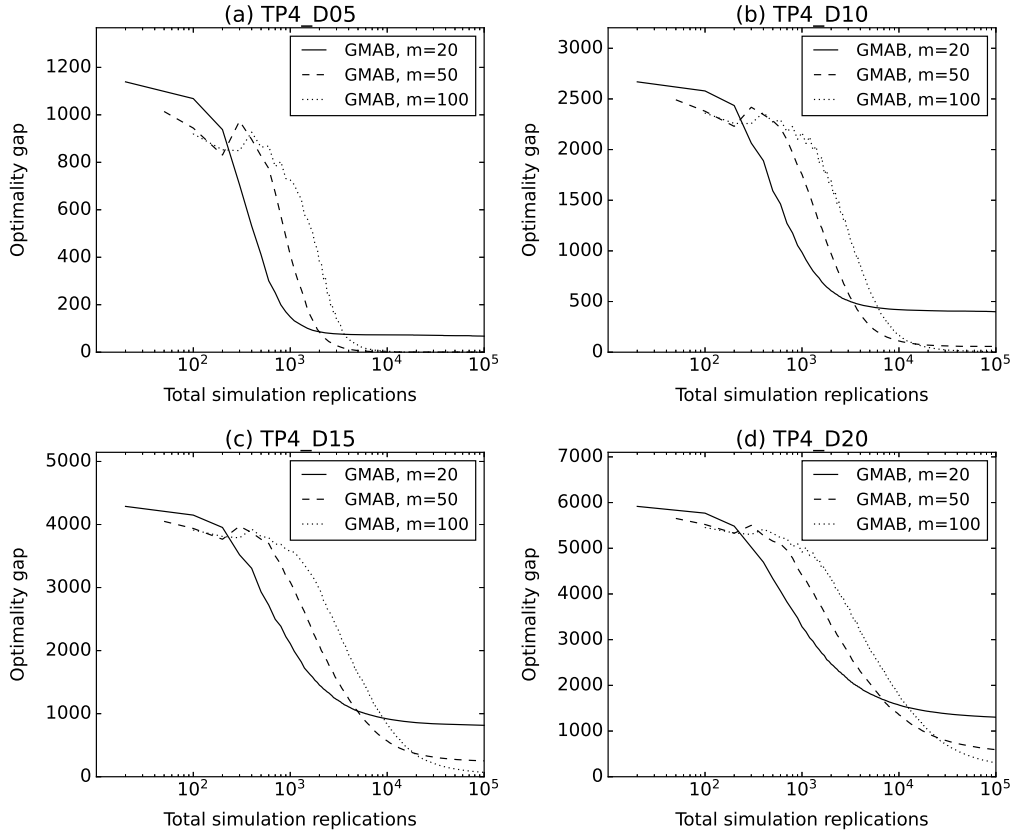}
\caption{Trajectories of GMAB's optimality gap averaged over all runs in TP4\_D05, TP4\_D10, TP4\_D15, and TP4\_D20 assuming different parametrizations of $m$.} \label{TP4}
\end{center}
\end{figure}

\begin{table*}
\setlength{\tabcolsep}{3pt}
   \centering
   \caption{Optimality gap after $10^5$ simulation replications averaged over all runs.}
   \label{tab:TP4_Table}
   \renewcommand{\arraystretch}{1.4}
   \begin{tabular}{c@{\hspace{1.2em}}cc@{\hspace{3.2em}}cccccccc}
     \toprule
		&&&\multicolumn2c{$m=20$}&&\multicolumn2c{$m=50$}&&\multicolumn2c{$m=100$}\\
		\cmidrule{4-5}\cmidrule{7-8} \cmidrule{10-11}
		&\raisebox{0pt}[0pt][0pt]{\begin{tabular}[b]{@{}c@{}}\,\\[-.5mm]Problem\end{tabular}} & $g(\textbf{x}^*)$ & $g(\hat{\textbf{x}}^*)$ & $\Delta(\hat{\textbf{x}}^*)$  &&    $g(\hat{\textbf{x}}^*)$ &  $\Delta(\hat{\textbf{x}}^*)$ & &  $g(\hat{\textbf{x}}^*)$ &  $\Delta(\hat{\textbf{x}}^*)$ \\
     \midrule
		\multirow{4}{*}{\STAB{\rotatebox[origin=c]{90}{std. dev. = $1$}}} 
		&TP4\_D05  & $-2500.22$ &  $-2432.66$  &   \phantom{11}67.56  && $-2499.71$  & \phantom{11}0.51 && $-2500.22$ & \phantom{11}0.00 \\
		&TP4\_D10  & $-5000.44$&  $-4601.16$  &   \phantom{1}399.28  && $-4943.41$  & \phantom{1}57.03 && $-4993.42$ & \phantom{11}7.02 \\
		&TP4\_D15  & $-7500.65$&  $-6685.09$  &  \phantom{1}815.56  &&  $-7248.78$  & 251.87 && $-7431.43$ & \phantom{1}69.22 \\
		&TP4\_D20  & $-10000.87$&  $-8698.10$  &  $1302.77$  && $-9408.10$ & $592.77$  && $-9696.93$ & $303.94$ \\
		\midrule
		\multirow{4}{*}{\STAB{\rotatebox[origin=c]{90}{std. dev. = $100D$}}} 
		&TP4\_D05  & $-2500.22$&  $-2396.85$  &  \phantom{1}103.37  && $-2473.14$ &  \phantom{11}27.08  && $-2479.60$ &  \phantom{11}20.62  \\
		&TP4\_D10  & $-5000.44$&  $-4440.44$  &  \phantom{1}560.00  && $-4749.65$ &  \phantom{1}250.79 && $-4869.02$ &  \phantom{1}131.42 \\
		&TP4\_D15  & $-7500.65$&  $ -6253.83$  & 1246.82&& $-6678.83$ &  \phantom{1}821.82 && $-6938.78$ &  \phantom{1}561.87\\
		&TP4\_D20  & $-10000.87$&  $-7983.44$  & 2017.43&& $-8333.43$ & $1667.44$&& $-8557.02$ & $1443.85$\vspace{10pt}\\		
  \bottomrule
\end{tabular}
\end{table*}

\subsection{Choice of Parameters}
\label{sec:Choice of Parameters}

Although the parametrization $p_\mathrm{cr}=1$, $p_\mathrm{mu}=0.25$, and $m=20$ was used in all the above test problems, it is not a one-size-fits-all configuration. 
A unique setting for all problems was chosen primarily to reflect the fact that in practical problems, parameter tuning is often possible only to a limited extent.
In general, the best choice of parameters depends strongly on the problem at hand. Nevertheless, in the following, we will give some general recommendations for the choice of parameters and discuss how they affect the performance of GMAB.

As mentioned in Section \ref{sec:Empirical Results}, the crossover operator mainly supports the exploration of new solutions at the start and shifts to exploitation as the iterations progress, which is generally a desirable behavior.
In our experiments, a high crossover probability therefore tended to perform better (in all problems but especially in high-dimensional multimodal problems).  
We did not observe the opposite effect in any problem (neither in problems from Section \ref{sec:Test Problems} nor in problems we studied but do not discuss here). 
For this reason, we recommend choosing a large crossover probability, such as $p_\mathrm{cr}=1$, in general.

For $p_\mathrm{mu}$ and $m$, we found a strong dependence on the underlying problem.
Furthermore, both parameters affect each other, which is why they should not be determined independently.
In multimodal problems where the local optima are more distant from each other, which applies, e.g., to all instances of TP4, larger $m$ values tended to perform better.
As already mentioned, this is because the resulting longer-lasting heterogeneity of solutions in $\mathcal{E}_k$ causes a longer exploration phase, which helps to identify global optima.
In combination with a large $m$, a low mutation probability $p_\mathrm{mu}$ was often advantageous, as this simultaneously reinforces exploitation. 
However, a large $m$ is not always be recommended. In TP3, for example, the local optima are very close to each other. In such problems a larger $m$ is disadvantageous, because simulation resources are overspent for exploration and hence, are not available for exploitation.
In particular, the latter is essential to identify actual global optima, especially in regions with solutions of similar quality and high noise, such as in TP1 and TP2. Here, a smaller $m$ is preferable.
For demonstration purposes, Figure \ref{PAPER_Parameters} visualizes GMAB's performance at fixed $p_\mathrm{cr}=1$ and variable $p_\mathrm{mu}$ and $m$ for TP1 and TP4\_D05.
The choice of parameters, apart from the problem itself, also depends on the available budget, as seen in Figure \ref{TP4}.

Consequently, parametrization is a difficult task, especially given that usually no information about the problem is available.
Regardless, GMAB achieves excellent performance even with suboptimal parametrizations, as for each test problem, the majority of the different configurations tested generated better results than those of the benchmark algorithms from the literature. 
To give an example, the reported optimality gaps for IS-COMPASS and IS-AHA are approximately 300 and 400, respectively, for TP\_D05 \cite[]{xu2013adaptive}. 
With a similar number of replications as in the latter study, GMAB achieves better results for almost all parameter combinations, cf. Figure \ref{PAPER_Parameters}b.
The same applies to the performance of GMAB in TP1 compared to that of GMIA, cf. Figure \ref{PAPER_Parameters}a and Figure \ref{TP1_2}.

\begin{figure}[t]
\begin{center}
\includegraphics[height=2.2in]{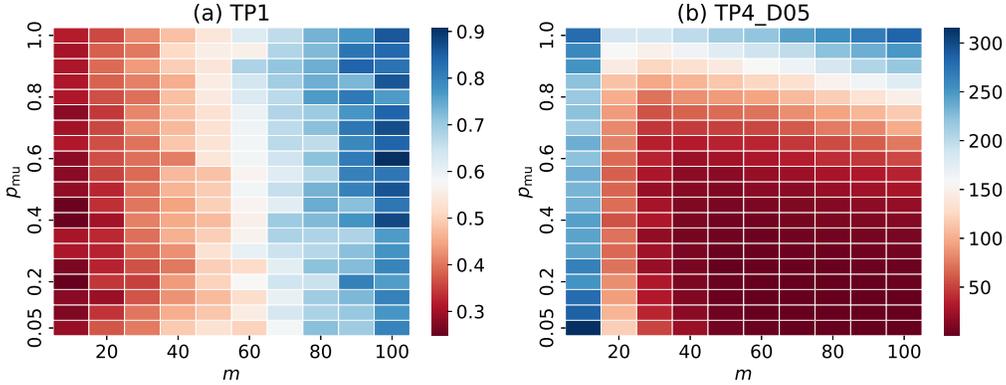}
\caption{(Color online) Optimality gap averaged over all runs with fixed $p_\mathrm{cr}=1$ and variable $p_\mathrm{mu}$ and $m$ after 2\,000 and 10\,000 simulation replications in TP1 and TP4\_D05, respectively.} \label{PAPER_Parameters}
\end{center}
\end{figure}

\subsection{Runtime}
\label{sec:Runtime}

In addition to the performance in terms of objective function values, the required runtime is a second important evaluation criterion.
Similar to \cite{sun2014balancing}, we distinguish two different problem categories: for problems of the first category, sampling simulation replications is significantly more time-consuming than the overhead of the actual algorithm.
The comparatively cost-intensive simulations in TP2 serve as an example.
For problems of the second category, the computational effort of taking simulation replications is negligible compared to the overhead of the algorithm itself. In this regard, TP3 and TP4 serve as examples.

Since GMAB requires fewer simulation replications to achieve similarly good or superior results than the benchmarks from the literature, see, e.g., Figure \ref{TP2_1}, its runtime for problems of the first category is, as a result, considerably lower.
However, the runtime advantage becomes much more pronounced in problems of the second category.
Since all operations related to memory control have logarithmic complexity, cf. Section \ref{sec:Efficient Memory Control Scheme}, the runtime per iteration barely rises as the number of visited solutions increases.
For illustration, Figure \ref{RUNTIME} shows the runtime of GMAB per iteration as a function of the memory size $|\mathcal{V}_k|$ in TP4 with $D=5,10,15,20$ averaged over all 400 runs.
Again, the default parametrization $p_\mathrm{cr}=1$, $p_\mathrm{mu}=0.25$, and $m=20$ was used.
As can be seen, the size of $\mathcal{V}_k$ hardly affects the runtime per iteration.
By contrast, the computational effort increases with the number of dimensions. This is obvious since
the larger $D$ the more possibilities exist for crossover and the more likely mutations will occur.
The actual computational effort for GMAB therefore occurs almost only with the genetic operators.

Furthermore, Table \ref{tab:Runtime} reports the runtimes of GMAB with respect to all test problems from Section \ref{sec:Test Problems} given a budget of $10^5$ simulation replications. In addition, we provide the average number of iterations executed within this budget.
The lower $D$ and the smaller $|\Theta|$, the more likely genetic operators generate offspring solutions $\textbf{x}\in\mathcal{M}_k$ that are already contained in $\mathcal{E}_k$. Since only solutions $\textbf{x}\in\mathcal{S}_k=\mathcal{E}_k\cup\mathcal{M}_k$ are visited in iteration $k$, a large intersection of  $\mathcal{E}_k$ and $\mathcal{M}_k$ will reduce $|\mathcal{S}_k|$.
For this reason, given a fixed budget of simulation replications, more iterations are executed on average in case of TP1 than, for example, in case of TP4\_D20.
With regard to the total runtimes, TP2 suffers from (GMAB independent) high costs due to performing simulation replications. All other test problems reveal the extremely low runtime of GMAB.
Compared to the benchmark algorithms, which have been applied to the same problems in previous studies, GMAB is considerably faster. 
To provide an example, GMAB requires approximately 0.02 seconds to achieve the reported results in Figure \ref{TP3}, while GPS, in contrast, requires approximately 10 seconds for a similar performance.
For more information regarding the runtimes of the benchmark algorithms, see \cite{xu2013adaptive,sun2014balancing,l2019gaussian,semelhago2021rapid}.
An in-depth runtime comparison is, however, beyond the scope of this study and turns out to be difficult, since the implementation language and the hardware used affect the runtime as well. Regardless, the main reason for GMAB's runtime advantage is primarily due to the logarithmic complexity of its memory-related operators.

Finally, to further improve the runtime of GMAB (especially for problems with high simulation costs, such as TP2), the simulation of all $\textbf{x}\in\mathcal{S}_k$ could be executed in parallel on multiple cores. 
Such an implementation would offer more (runtime) capacities for tasks like problem-specific parameter tuning.
However, we refrained from parallel computing in our computational experiments.

\begin{figure}
\begin{center}
\includegraphics[height=2.2in]{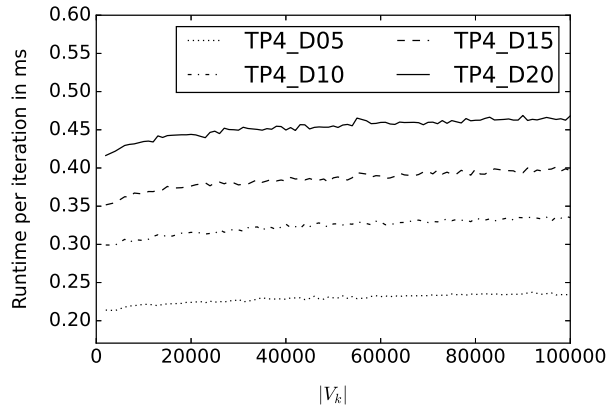}
\caption{Required runtimes per iteration in milliseconds (ms) for TP4\_D05, TP4\_D10, TP4\_D15, and TP4\_D20 averaged over all runs.} \label{RUNTIME}
\end{center}
\end{figure}


\begin{table*}[hbt]
\caption{Required runtimes in seconds (s) for all test problems given a budget of $10^5$ simulation replications and the number of iterations executed within this budget averaged over all runs.}
\setlength{\tabcolsep}{3pt}
  \centering
  \label{tab:Runtime}
	\begin{tabular}{l@{\hspace{1.6em}}r@{\hspace{1.6em}}r@{\hspace{1.6em}}r@{\hspace{1.6em}}r@{\hspace{1.6em}}r@{\hspace{1.6em}}r@{\hspace{1.6em}}r}
	\toprule
	& TP1 & TP2 & TP3 & TP4\_D05 & TP4\_D10 & TP4\_D15 & TP4\_D20 \\
	\cmidrule{2-8}
	Runtime (s) & 1.79 & 315.83 & 0.45 & 0.56 & 0.76 & 0.90 & 1.06 \\
	Iterations & 3218.66 & 2686.95 & 2818.20 & 2780.91 & 2565.60 & 2519.70 & 2518.37 \\
	\bottomrule
\end{tabular}
\end{table*}

\section{Conclusion}
\label{sec:Conclusion}

This paper proposes a new algorithm that combines concepts from the reinforcement learning domain of multi-armed bandits and random search components from the domain of genetic algorithms to solve discrete stochastic optimization problems via simulation.
We show that the so-called GMAB algorithm converges with probability 1 to the set of globally optimal solutions as the simulation effort increases.
The empirical experiments demonstrate that GMAB achieves considerably superior results at lower runtime compared to related benchmark algorithms from the literature in all considered test problems. They further emphasize the excellent performance of GMAB even in high-dimensional multimodal problems with different levels of noise and an enormous number of feasible solutions. 
Based on these results, we are convinced that GMAB should become a core dimension of stochastic optimization.

The proposed approach offers many opportunities for further extensions and investigations.
First, the impact of several other forms of operators offered by the genetic algorithm literature 
or additional techniques like adaptive mutation step size control
should also be analyzed in the context of GMAB.
Second, and most importantly, the idea of embedding random search strategies is not restricted to strategies borrowed from genetic algorithms. We have already made initial attempts to generalize it to a framework where other metaheuristics are embedded as well in order to solve discrete stochastic optimization problems.
Third, simulation experiments are usually associated with large computational effort in practical applications. Against this background, the already mentioned concept of running simulation replications in parallel, should be further elaborated within the latter framework.

\bibliographystyle{model5-names}
\bibliography{references}  






\end{document}